\newtheorem{theorem}{Theorem}
\newtheorem{proposition}[theorem]{Proposition}%
\newcommand{\CT}[1]{\color{black} #1} 
\newcommand{\RL}[1]{\color{black} #1} 
\title{Low dimensional representation of multi-patient flow cytometry datasets using optimal transport for measurable residual disease detection in leukemia}
\author[1]{Erell Gachon}
\author[1]{J\'er\'emie Bigot}
\author[2]{Elsa Cazelles}
\author[3]{Audrey Bidet}
\author[3]{Jean-Philippe Vial}
\author[4]{Pierre-Yves Dumas}
\author[3]{Aguirre Mimoun}
\affil[1]{Institut de Math\'ematiques de Bordeaux, Universit\'e de Bordeaux, CNRS (UMR 5251)}
\affil[2]{CNRS, IRIT (UMR 5505), Universit\'e de Toulouse}
\affil[3]{CHU Bordeaux, Laboratoire d'H\'ematologie}
\affil[4]{CHU Bordeaux, Service d'H\'ematologie Clinique et
de Th\'erapie Cellulaire, Centre Hospitalier Universitaire de Bordeaux, F-33000 Bordeaux, France.}
\begin{document}

\maketitle

\abstract{
Representing and quantifying Measurable Residual Disease (MRD) in Acute Myeloid Leukemia (AML), a type of cancer that affects the blood and bone marrow, is essential in the prognosis and follow-up of AML patients. As traditional cytological analysis cannot detect leukemia cells below 5\%, the analysis of flow cytometry datasets is expected to provide more reliable results.
In this paper, we explore statistical learning methods based on  optimal transport (OT)  to achieve a relevant low-dimensional representation of multi-patient flow cytometry  measurements (FCM) datasets considered as  high-dimensional probability distributions. Using the framework of OT, we justify the use of the K-means algorithm for dimensionality reduction of multiple large-scale point clouds through mean measure quantization  by merging all the data into a single point cloud. After this quantization step, the visualization of the intra- and inter-patient FCM variability is carried out by  embedding  low-dimensional quantized probability measures into a linear space using either Wasserstein Principal Component Analysis (PCA) through linearized OT or log-ratio PCA of compositional data. Using a publicly available FCM dataset and a FCM dataset from Bordeaux University Hospital, we  demonstrate the benefits of our approach over the popular kernel mean embedding technique for statistical learning from multiple high-dimensional probability distributions. We also highlight the usefulness of our methodology for  low-dimensional projection and clustering patient measurements according to their level of MRD in AML from FCM. In particular, our OT-based approach allows a relevant and informative two-dimensional representation of the results of the FlowSom algorithm, a state-of-the-art method for the detection of MRD in AML using multi-patient FCM.\\
\textbf{Data Availability Statement:} All the analysis carried out in this work to analyze multiple FCM are reproducible through the use of Python codes that are available at \url{https://github.com/erellgachon/CytoLOT}.\\
\textbf{Correspondance:} Erell Gachon$^1$. Email :
 \href{erell.gachon@math.u-bordeaux.fr}{erell.gachon@math.u-bordeaux.fr}.\\
}

\section{Introduction}

Acute Myeloid Leukemia (AML) is a cancer of the blood and bone marrow when the development of blood stem cells (immature cells) into mature cells is altered, and the number of these immature cells, called myeloid blasts, increases. The diagnosis of AML consists in the detection of blasts which invade the bone marrow to great amounts, and it is typically assessed with the cytological analysis of the patient's bone marrow. The goal of the chemotherapy treatment is therefore to eradicate the blasts population: 80\% of patients reach complete remission after one cycle of chemotherapy, however 50\% relapse \cite{vial2021unsupervised}. The definition of complete remission states that the bone marrow should contain less than 5\% of blasts. However, detecting smaller numbers of persisting residual leukemic cells, called Measurable Residual Disease (MRD), is critical for disease monitoring and relapse prognosis. This motivates the need to catch the presence of blasts at very low thresholds (between 0.1\% and {\RL 0.004\%}). The detection and the quantification of MRD in AML has been widely studied as a predictive and prognostic tool \cite{dekker2023using, heuser20212021, short2022association, short2020association, cluzeau2022measurable, ivey2016assessment}. Molecular biology and flow cytometry are the main two techniques currently available to achieve informative sensitivity levels for anticipating the disease's progression. While molecular biology techniques for the assessment of MRD were the first to be developed and have a recognized decision-making value \cite{preudhomme2020maladie, badaoui2023suivi}, they have limited applicability (only available for 40\% of the patients) due to the lack of sufficiently specific molecular targets in 60\% of patients.

Flow cytometry  is a technology which allows the fast analysis of multiple parameters of a large population of cells from a biological sample. The interaction of the cells with the light emitted from different lasers enables the evaluation of two types of characteristics: the structure of the cell (size, granularity) and the type of proteins expressed on the surface of the analyzed cell.
For a given sample, each parameter is measured for each cell. One of the main goals of flow cytometry data analysis is then to identify the different types of cells in a biological sample, and to be able to locate regions of interest. Although flow cytometry datasets are widely used in practice, analysis of the datasets is not straightforward, as the large amount of data they generate and the dimensionality of the observations pose significant scientific challenges for their interpretation for MRD assessment purposes \cite{vial2021unsupervised}. Additionally surface markers used to characterize AMLs have highly variable expression levels from patient to patient, ranging from overexpression to expression negativity, and it is the combination of these abnormalities that defines the leukemic phenotype.

 The current preferred method for identifying cells is the so-called \emph{manual gating}. It is based on the successive projection of the point cloud of the cytometry measurements according to two selected parameters. The experts then partition the space according to the two selected parameters, visually identifying regions of high density. In the diagnosis of AML, the process of manual gating is well-suited and largely employed to the detection of blasts (tumor cells) which invade the body to great amounts (at thresholds of $10^{11}$ to $10^{12}$ {\RL cells in the body}), allowing easy identification of the tumor cell subpopulation. However, it presents a few shortcomings: it is highly dependent on the biologist's expertise and it is extremely time-consuming. Moreover, the significant variability in flow cytometry data is a limiting factor. This variability is induced both by the biological heterogeneity between the samples and by technical details such as the cytometer performance and the data acquisition parameters. Finally, because of the very low thresholds of MRD, the identification based on manual gating of cellular subpopulation of blasts in flow cytometry data during follow-up of patients treated with chemotherapy is made tricky.

For a given patient, a single dataset of flow cytometry  is usually made of measurements using 10-20 parameters from hundreds of thousands of cells. This motivates the need to develop dimensionality reduction tools for representing variations of interest within datasets acquired from an individual patient and across a population of patients using multiple flow cytometry measurements (FCM). To overcome the shortcomings of a manual approach to data gating in FCM, the development of automatic inference methods is currently a very active field of research in computational statistics and bioinformatics. Several methods have been introduced to automate the gating of cytometry data \cite{weber2016comparison}. One can cite
unsupervised algorithms such as FlowMeans \cite{aghaeepour2011rapid}, based on K-means, Cytometree \cite{commenges2018cytometree}, based on binary trees, or \cite{hejblum2019sequential} based on Dirichlet process mixtures. There also exist supervised algorithms such as DeepCyTOF \cite{li2017gating} or flowlearn \cite{lux2018flowlearn}. Regarding MRD detection for AML, \cite{vial2021unsupervised} proposes a method using FlowSOM \cite{van2015flowsom}, a visualization and clustering method based on self-organizing maps. In this paper, we will denote MRD-FlowS the MRD measure calculated by this method. On the other hand, MRD-BioM will designate the MRD value assessed with molecular biology. In \cite{nguyen2023computational}, the authors also use FlowSOM to detect MRD for chronic lymphocytic leukemia.

Optimal Transport (OT) provides a versatile and powerful framework for addressing a diverse array of problems involving point clouds modeled as discrete probability measures. In the last decade, the use of computational OT \cite{peyre2019computational} in data science and the related notion of Wasserstein distance have gained increasing interest in statistics (see e.g.\ \cite{bigotreview,panaretos2020invitation} for recent overviews), in particular for dimension reduction of a set of probability measures using the notion of Wasserstein Principal Component Analysis (PCA) \cite{bigot2017geodesic,cazelles:hal-01581699}. OT has also proved useful for understanding various phenomena in biology \cite{bellazzi2021gene, cao2022manifold, demetci2020gromov, schiebinger2019optimal}. For the purpose of automatic gating from flow cytometry data, OT has been previously leveraged for quantifying the distance between two point clouds, taking into account the geometry of the data in high-dimensional spaces, giving rise to two OT-based algorithms: optimalFlow \cite{del2020optimalflow}  proposes a method for supervised gating, and CytOpt \cite{freulon2023cytopt} estimates the different cell proportions of a population from cytometry measurements. More recently, \cite{mahan2024point} introduced a method based on linear optimal transport for the classification of point clouds and showed its applications to flow cytometry datasets from AML patients.

\subsection{Contributions} {\CT The novelty of our work compared to }{\RL previous studies \cite{basu2014detecting, wang2013linear} revolving around linear optimal transport and PCA} {\CT is} a statistical methodology based on computational OT for embedding into a low-dimensional linear space a set of flow cytometry  measurements (FCM) modeled  as  high-dimensional probability distributions. Our approach is suitable for the study of datasets that consist of multiple FCM from various patients  with possibly several measurements for a given patient (e.g.,  acquired at different time or from different cytometers). Using the notion of Wasserstein barycenter \cite{agueh2011barycenters}, we justify the use of the K-means algorithm for dimensionality reduction of multiple large-scale point clouds through mean measure quantization by merging all the data into a single point cloud.  This quantization step allows to reduce the size of the support of the observed data viewed as high-dimensional discrete probability measures.  The resulting low-dimensional support quantized probability measures are then embedded into a linear space using either Wasserstein  PCA \cite{bigot2017geodesic,cazelles:hal-01581699} through linearized  OT \cite{wang2013linear, basu2014detecting, moosmuller2023linear, mahan2024point} or log-ratio PCA of compositional data \cite{Aitchison83}. This enables the analysis of intra- and inter-patient FCM variability. Using a publicly available dataset and a dataset provided by Bordeaux University Hospital, this approach is first shown to achieve a meaningful  clustering of patients according to their FCM. We also  demonstrate the benefits of our methodology over the popular kernel mean embedding (KME) technique  \cite{muandet2017kernel}  for statistical learning from multiple high-dimensional probability distributions. 
 Secondly, we study applications to the assessment of MRD in AML.
In particular, our OT-based approach is found to provide a relevant and informative two-dimensional representation of the results obtained with the FlowSom algorithm (MRD-FlowS), a robust method with growing adoption in the field for the detection of MRD in AML using FCM. This representation could be a valuable tool to help biologists confirm or {\RL invalidate} the relapse prognosis of AML patients.

\subsection{Outline of the Paper}
In Sections \ref{sec:DataML} and \ref{sec:HIPC}, we first describe the two FCM datasets used to evaluate our methodology. We then introduce the FlowSOM method \cite{van2015flowsom} in Section \ref{sec:FlowSOM}. Section \ref{sec:OT} is devoted to defining OT, the Wasserstein barycenter, our proposed mean measure quantization-based dimension reduction method, and embedding in the OT sense. The visualization of datasets using PCA of two different embeddings is then presented in Section \ref{sec:PCA}. In Section \ref{sec:perf}, we define the silhouette score for performance evaluation purposes, while Section \ref{sec:KME} describes the kernel mean embedding technique. Finally, the results for the two FCM datasets are presented in Section \ref{sec:results}, and a discussion is given in Section \ref{sec:dis} to conclude the paper.

\section{Materials and Methods}
\label{sec:mat_methods}

\subsection{DATAML-Bordeaux Study}
\label{sec:DataML}

A first set of FCM for the statistical study conducted in this paper is provided by the Hematology Laboratory of Bordeaux University Hospital (H\^opital Haut-L\'ev\^eque). The dataset was collected from patients recruited between October 2019 and December 2022, diagnosed with acute myeloid leukemia (AML). They received intensive chemotherapy treatment and achieved complete remission after induction. This database is covered by the MR-004 of Bordeaux University Hospital. The Research Ethics Committee of the Bordeaux University Hospital certified this project and received a favorable opinion under the reference number CER-BDX 2024-164.

From this study, we have kept 182 patients and 22 normal bone marrow (NBM). For each patient, two or three FCM are available: one diagnosis ($\sim$50,000 cells) and one or two follow-ups (up to 500,000 cells). In total, we have 512 FCM. For each follow-up, we dispose of its associated MRD-FlowS estimation with the FlowSOM method. 

For the follow-ups, the MRD-BioM measure  is also given when available (108 out of 314 follow-ups), which we consider as ground truth. {\RL Concerning molecular biology, RNAs were purified from Bone Marrow mononuclear cells using Trizol reagent (Invitrogen-Thermo Fischer, Carlsbad, CA, USA). Reverse transcription was performed using the SuperScript IV VILO Master Mix (Lifetechnologies-Thermo Fischer, Carlsbad, CA, USA). The assessment of transcript levels was performed on a Light Cycler 480 (Roche LifeScience, Penzberg, Germany) with a specific RQ-PCR assay as previously described in \cite{kronke2011monitoring}. MRD levels were reported as the normalized values of fusion transcripts or NPM1mut copy number/ABL copy number $\times$ 100 (\%). The quantitative detection limit of the assays was 0.02\% for NPM1 mutations. The achievement of MRD levels below this threshold was defined as a negative MRD.}

There are 11 markers in the flow cytometry datasets : 2 are structural (FS INT, SS INT) and 9 are protein expressions (CD34, CD13, CD38, CD7, CD33, CD56, CD117, HLA-DR, CD45). Hence, the datasets consist of multiple point clouds in dimension $d=11$. We represent the $N = 512$ FCM as high-dimensional discrete probability measures to leverage OT statistical learning tools. For $1\leq i \leq N$, we represent the $i$-th dataset $X^i = (X^i_1,\cdots, X^i_{m_i}) \in(\mathbb{R}^d)^{m_i}$ with $m_i$ the number of cells in this dataset, as a discrete measure $\mu^i = \frac{1}{m_i}\sum_{j=1}^{m_i} \delta_{X_j^i}$.

 The cytometry files are stored in both FCS and CSV format and have already undergone standard pre-processing, i.e., transformation, compensation and normalization.

\subsection{HIPC dataset}
\label{sec:HIPC}

 Using a second dataset, that is publicly available on ImmuneSpace \cite{Data_HIPC}, we illustrate the  benefits of our methodology to cluster FCM data by analyzing observations from the T-cell panel of the Human Immunology Project Consortium (HIPC). Seven laboratories stained three replicates (denoted A, B, and C) of three cryo-preserved biological samples denoted patient 1, 2, and 3 (e.g., cytometry measurements from the Stanford laboratory for replicate C from patient 1). Each resulting dataset, that consists of the measurements of 7 markers for approximately $100000$ cells, thus leads to a set of FCM consisting of $N = 7 \times 3 \times 3 = 63$ point clouds in dimension $d=7$. {\RL The markers are the following : CCR7, CD4, CD45RA, CD3, HLADR, CD38, CD8}. For this dataset, a relevant clustering consists of three groups, each comprising data collected from a single patient.

\subsection{The FlowSOM Method}
\label{sec:FlowSOM}

FlowSOM \cite{van2015flowsom} is a visualization and clustering method that is specifically designed for analyzing flow cytometry data with Self-Organizing Maps (SOMs). This algorithm has been leveraged in \cite{vial2021unsupervised} for the evaluation of MRD in AML.

A SOM is an unsupervised technique for both clustering and dimensionality reduction: the training samples are represented through a low-dimensional discretized representation of the input space, called a map. Each node of the map is linked to a weight vector corresponding to a location in the input space. Close nodes in the map coincide with close points of the input space. In FlowSOM, the number of nodes chosen for the map is greater than the number of expected clusters, to guarantee the purity of clusters. The generated SOM is then visualized through a Minimum Spanning Tree (MST). The final step of FlowSOM is a meta-clustering process, to reduce the number of clusters. 

The method in \cite{vial2021unsupervised} to evaluate MRD in a follow-up of a patient undergoing treatment first consists of merging three cytometry entry files: a normal bone marrow (from an arbitrary, healthy donor), the diagnosis of the patient with AML, and his follow-up. The authors  in \cite{vial2021unsupervised} then perform a FlowSOM on the merged files, the last step of which results in an MST. They visualize the weights on the MST for each entry file. By comparing the three MSTs globally, they are able to identify the Nodes of Interest (NoIs) that are decisive in assessing the MRD present in the measurements. It is reported in \cite{vial2021unsupervised} that this method shows 80\% concordance with MRD-BioM, that is with the ground truth results. More precisely, MRD-FlowS yields 85\% specificity and 69\% sensitivity compared to MRD-BioM.

\subsection{Optimal Transport}
\label{sec:OT}

\subsubsection{Optimal Transport and Wasserstein Barycenter}

OT comes from a very concrete problem formulated by Gaspard Monge in 1781 \cite{Monge}: a worker with a shovel has to move a pile of sand of a certain shape into a hole of another shape. The aim is to minimize the global effort of the worker with regards to some cost function $c: \mathcal{X}\times \mathcal{Y}\rightarrow \mathbb{R}$, representing the \emph{cost} of moving mass from a point $x\in\mathcal{X}$ of the source, to a point $y\in\mathcal{Y}$ of the target. In this paper, we consider the squared Euclidean cost $c(x,y)=\|x-y\|^2$  with $\mathcal{X} = \mathcal{Y} = \mathbb{R}^d$. More formally, OT is  formulated as finding an optimal measurable map $T:\mathcal{X}\rightarrow \mathcal{Y}$ which sends a source discrete probability measure $\mu = \sum_{i=1}^n a_i\delta_{x_i}$ to a target measure $\nu = \sum_{j=1}^m b_j \delta_{y_j}$, with $a$ (resp. $b$) in the unit simplex $\Sigma_n$ (resp. $\Sigma_m$). The Monge problem then reads as
\begin{equation}\label{monge}
\min\limits_{T_{\#}\mu = \nu}\sum\limits_i a_i\|x_i-T(x_i)\|^2
\end{equation}
where $T_{\#}\mu$ denotes the push-forward measure of $\mu$ by a map $T: \{x_1,\cdots,x_n\}\rightarrow \{y_1,\cdots,y_m\}$ satisfying $\forall \ 1\leq j \leq m, b_j = \sum_{i:T(x_i)=y_j} a_i$. A solution of Problem \eqref{monge} is then called a Monge map.

The Kantorovich's relaxation of Monge's problem allows mass splitting of the source by introducing a so-called transport plan that is expressed as a coupling matrix $P\in \mathbb{R}_+^{n\times m}$ where $P_{ij}$ represents the number of units (or mass) moved from $x_i$ to $y_j$. Then, the Kantorovich formulation of optimal transport between $\mu$ and $\nu$ reads as
\begin{equation}\label{kanto}
\min\limits_{P\in\mathbb{R}_+^{n\times m}} \left\{    \sum\limits_{i,j} P_{ij}\|x_i-y_j\|^2 | \sum\limits_j P_{ij} = a_i , \sum\limits_i P_{ij} = b_j \right\}.
\end{equation}
If we denote $P^*$ an optimal plan between $\mu$ and $\nu$ in \eqref{kanto}, the 2-Wasserstein distance is given by
\begin{equation}\label{wassersteindistance}
    W_2(\mu,\nu) = \left(\sum\limits_{i,j} P^*_{ij}\|x_i-y_j\|^2 \right)^{1/2}.
\end{equation}
Using the above 2-Wasserstein distance, a notion of  averaging of measures is given by the Wasserstein barycenter  of a set of discrete measures $(\mu^i)_{1 \leq i \leq N}$ \cite{pmlr-v80-claici18a}, as the measure $\rho = \sum_{k=1}^K b_k \delta_{x_k}$ supported on $K$ points with weights $b$ in the probability simplex solving the following optimization problem: 

\begin{equation}\label{wbary}
 \min\limits_{b\in\Sigma_K, x\in(\mathbb{R}^d)^K} \frac{1}{N} \sum\limits_{i=1}^N W_2^2\left(\sum\limits_{k=1}^K b_k \delta_{x_k}, \mu^i\right).  
\end{equation}

{\RL However, using OT in problems involving point clouds suffers from high computational costs as it has complexity $O(n^3\log(n))$ \cite{peyre2019computational} for points clouds of $n$ points. Hence, when the number of points per clouds is larger than $10^5$, solving the exact OT problem becomes too expensive.}

\subsubsection{Dimension Reduction via Mean Measure Quantization} \label{sec:quantization}

A first contribution of our work is to use the OT framework to justify a classical pre-processing step in flow cytometry data analysis: summarizing spatial distribution of cells for each FCM sample into a small number $K$ of common $d$-dimensional points. Indeed, the dimensionality of the data (i.e. the number of cells per FCM sample) is prohibitive for the application of any statistical analysis based on computational OT on point clouds. For example, in the FlowSOM method \cite{vial2021unsupervised}, the authors read the information of the presence of blasts in the size of the clusters generated by the SOM for each dataset, focusing on clusters corresponding to cells compatible with blasts in a CD45/SS graph.

We then rely on Wasserstein's notion of average to address the synthesis of the data. However, the Wasserstein barycenter as defined in \eqref{wbary} cannot encode the information of the variation of the proportion of blasts between two types of samples, and therefore cannot generally differentiate the diagnosis and the follow-up. Consequently, we propose the following alternative Wasserstein barycenter problem with varying weights for each measure:

\begin{equation}\label{newbary}
 \min\limits_{a\in(\Sigma_K)^N, x\in(\mathbb{R}^d)^K} \frac{1}{N} \sum\limits_{i=1}^N W_2^2\left(\sum\limits_{k=1}^K a^i_k \delta_{x_k}, \mu^i\right).    
\end{equation}
The optimization problem \eqref{newbary} then provides $N$ weights vectors $(a^i)_{i=1}^N$ in the simplex $\Sigma_K$ associated to the $N$ input measures $(\mu^i)_{i=1}^N$, which results in approximating measures
\begin{equation}
\nu^i = \sum_{k=1}^K a^i_k \delta_{x_k}, \quad i=1,\ldots,N \label{eq:measuresnu}
\end{equation}
that share the same low-dimensional support $(x_1,\cdots,x_K)\in(\mathbb{R}^d)^K$, with $K$ typically small. The following proposition shows that solving Problem \eqref{newbary} amounts to solve the $K$-means problem for the mean measure $\overline{\mu} = \frac{1}{N} \sum_{i=1}^N \mu^i$ by merging all the data into a single point cloud. This dimension reduction step of multiple probability measure has been referred to as mean measure quantization in \cite{chazal21}, but its connection to the Wasserstein barycenter problem   \eqref{newbary} with varying weights has not been observed so far.

\begin{proposition}\label{prop1}
    When the $\mu^i$'s are absolutely continuous w.r.t. the Lebesgue measure, the Wasserstein barycenter problem with varying weights \eqref{newbary} is equivalent to the $K$-means quantization of the mean measure $\overline{\mu} = \frac{1}{N} \sum_{i=1}^N \mu^i$, that is,
    $$\min\limits_{a\in(\Sigma_K)^N, x\in(\mathbb{R}^d)^K} \frac{1}{N} \sum\limits_{i=1}^N  W_2^2\left(\sum\limits_{k=1}^K a^i_k \delta_{x_k}, \mu^i\right) =  \min\limits_{x\in(\mathbb{R}^d)^K} W_2^2\left(\sum\limits_{k=1}^K \overline{\mu}(V_{x_k}) \delta_{x_k}, \overline{\mu}\right)$$
where $\overline{\mu} = \frac{1}{N} \sum_{i=1}^N \mu^i$ is the average measure, and $V_{x_k}$ is the Voronoi cell centered at point $x_k$:
\begin{equation}\label{voronoi}
    V_{x_k} = \{z\in \mathbb{R}^d ~:~ \|z-x_k\|^2 \leq \|z-x_{k'}\|^2, ~ \forall k'\}.
\end{equation}
Moreover, a  solution $(a,x)  \in (\Sigma_K)^N \times (\mathbb{R}^d)^K$ of Problem \eqref{newbary} verifies
$$
a^i_k = \int_{V_{x_k}}\mathrm{d}\mu^i(y)=\mu^i(V_{x_k}).
$$

\end{proposition}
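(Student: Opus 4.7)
The plan is to solve the optimization \eqref{newbary} by a two-step minimization: first minimize over the weight vectors $a^i\in\Sigma_K$ for a fixed common support $x=(x_1,\ldots,x_K)$, then minimize the resulting expression over $x$. The key simplification is that the $a^i$'s are only required to lie in the simplex, so they are essentially free variables, and each inner problem decouples across the index $i$.

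For the inner minimization, I would exploit the structure of couplings between an absolutely continuous source measure $\mu^i$ and a discrete target supported on $\{x_1,\ldots,x_K\}$. Any such coupling is determined by a measurable assignment $T:\mathbb{R}^d\to\{1,\ldots,K\}$, with transported mass $a^i_k=\mu^i(T^{-1}(k))$ and total cost
\[
\sum_{k=1}^K \int_{T^{-1}(k)} \|y-x_k\|^2\,\mathrm{d}\mu^i(y).
\]
Since $a^i_k$ is unconstrained except for summing to one, the constraint $a^i_k=\mu^i(T^{-1}(k))$ is automatically consistent for any choice of partition. Hence the problem reduces to minimizing the displayed quantity over measurable partitions, which is solved pointwise by assigning each $y$ to its nearest $x_k$, namely $T^{-1}(k)=V_{x_k}$ (the Voronoi cell in \eqref{voronoi}). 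This directly yields the formula $a^i_k=\mu^i(V_{x_k})$ for the optimal weights and the optimal inner value $\sum_k\int_{V_{x_k}}\|y-x_k\|^2\,\mathrm{d}\mu^i(y)$.

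Averaging over $i$ and swapping the finite sum with the integral, the objective becomes
\[
\sum_{k=1}^K \int_{V_{x_k}} \|y-x_k\|^2\,\mathrm{d}\overline{\mu}(y),
\]
which is exactly the classical $K$-means quantization functional for $\overline{\mu}$. Reapplying the inner argument to $\overline{\mu}$ itself identifies this with $W_2^2\bigl(\sum_k \overline{\mu}(V_{x_k})\delta_{x_k},\overline{\mu}\bigr)$, giving the claimed equality.

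The main obstacle is the justification that the inner OT problem reduces to a nearest-neighbor partition: one needs to rule out mass on the shared boundaries of Voronoi cells, which is where the absolute continuity hypothesis on $\mu^i$ enters, ensuring $\mu^i(\partial V_{x_k})=0$ so that ties have no influence and the Kantorovich optimum is attained by a Monge-type assignment. Once this measure-theoretic point is established, the remainder is essentially bookkeeping: decoupling the inner minimizations, applying Fubini/linearity to pass from $\frac{1}{N}\sum_i \mu^i$ to $\overline{\mu}$, and reading off the $K$-means objective.
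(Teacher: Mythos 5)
Your proof is correct, but it takes a genuinely different route from the paper's. The paper works in the semi-dual: it writes each $W_2^2\bigl(\sum_k a^i_k\delta_{x_k},\mu^i\bigr)$ via the Kantorovich dual with potential $\phi^i\in\mathbb{R}^K$ and its $c$-transform, then uses the first-order conditions of the resulting saddle problem --- the gradient in $a^i$ is $\phi^i$, forcing $\phi^i=0$ at the optimum, and the gradient in $\phi^i$ then forces $a^i_k=\mu^i(L_{x_k}(0))=\mu^i(V_{x_k})$, collapsing Laguerre cells to Voronoi cells. You instead argue in the primal: with the target weights free, every measurable partition induces a feasible coupling, and the pointwise bound $\int\|y-x\|^2\,\mathrm{d}\pi\geq\int\min_k\|y-x_k\|^2\,\mathrm{d}\mu^i(y)$ shows the nearest-neighbor assignment is optimal. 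Your approach is more elementary and avoids the (somewhat terse) envelope/first-order reasoning of the paper; the paper's dual approach, in exchange, directly exhibits the optimal potential and fits the standard semi-discrete OT machinery. One wording to fix: it is not true that ``any such coupling is determined by a measurable assignment'' --- a general coupling may split the mass at $y$ according to a conditional probability vector $(p_k(y))_k$ over the targets. What saves you is exactly the inequality above (linearity of the cost in $p_k(y)$ means the pointwise minimum over the simplex is attained at a vertex), so the infimum over all couplings equals the infimum over deterministic partitions; you invoke the pointwise optimality, so the argument goes through, but the reduction to partitions should be justified by this lower bound rather than asserted. Your placement of the absolute-continuity hypothesis (only needed so that Voronoi boundaries are null and $a^i_k=\mu^i(V_{x_k})$ is unambiguous) is accurate, and your closing identification of the $K$-means functional with $W_2^2\bigl(\sum_k\overline{\mu}(V_{x_k})\delta_{x_k},\overline{\mu}\bigr)$ by the same two-sided bound is sound.
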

\begin{proof}[Proof of Proposition~{\upshape\ref{prop1}}.]

We first rewrite the functional to minimize in   \eqref{newbary} using the dual Kantorovich formulation of OT (see e.g. \cite{villani2009optimal})
\begin{align}
F(a,x):&= \frac{1}{N} \sum_{i=1}^N W_2^2\bigl(\sum_{k=1}^K a^i_k \delta_{x_k}, \mu^i\bigr)\\
 &= \frac{1}{N} \sum\limits_{i=1}^N \sup\limits_{\phi^i \in \mathbb{R}^K} \sum\limits_{k=1}^K \Bigl( a^i_k \phi^i_k + \int_{\mathbb{R}^d} \tilde{\phi}^i(y) \mathrm{d} \mu^i(y) \Bigr)
\end{align}
where $\phi^i\in\mathbb{R}^K$ is a Kantorovich potential and $\tilde{\phi}^i(y) = \inf_{1 \leq k \leq K} \{ \|x_k-y\|^2 - \phi^i_k \}$ is the $c$-transform of $\phi^i$. We denote $H(\phi^i, x, a^i)= \sum_{k=1}^K \Bigl( a^i_k \phi^i_k + \int_{\mathbb{R}^d} \tilde{\phi}^i(y) \mathrm{d} \mu^i(y) \Bigr)$, and the Laguerre cell $L_{x_k}(\phi^i) = \{z\in \mathbb{R}^d ~:~ \|z-x_k\|^2 - \phi^i_k \leq \|z-x_{k'}\|^2-\phi^i_{k'}, ~ \forall k'\}$. Note that $V_{x_k}= L_{x_k}(0)$.

First, we have that $\frac{\partial H}{\partial a^i_k} = \phi^i_k$. Additionally, the derivative of $H$ with respect to $\phi^i_k$ is given by
$$\frac{\partial H}{\partial \phi^i_k} = a^i_k - \int_{L_{x_k}(\phi^i)}\mathrm{d}\mu^i(y).$$
Then, by first order condition, we have $\phi^i_k=0$ and $a^i_k = \int_{V_{x_k}}\mathrm{d}\mu^i(y)=\mu^i(V_{x_k})$.
Finally, we have
\begin{align*}
    \min_{a} F(a,x) &= \frac{1}{N} \sum_{i=1}^N H(0_{\mathbb{R}^K}, (x_k)_{k=1}^K, (\mu^i(V_{x_k}))_{k=1}^K) \\
    &= \frac{1}{N} \sum_{i=1}^N\sum_{k=1}^K \int_{V_{x_k}} \|x_k-y\|^2\mathrm{d}\mu^i(y)\\
    &= \sum\limits_{k=1}^K \int_{V_{x_k}} \|x_k-y\|^2\mathrm{d}\bar{\mu}(y)\\
    &= W_2^2\left(\sum\limits_{k=1}^K \overline{\mu}(V_{x_k}) \delta_{x_k}, \overline{\mu}\right)
\end{align*}
which concludes the proof.
\end{proof}

\subsubsection{An Embedding with the OT Linearization}

 The Wasserstein space consisting of probability measures admitting a moment of order 2 endowed with the 2-Wasserstein distance  has  nice geometric properties such as existence of geodesics and a pseudo-Riemannian structure \cite{ambrosio2005gradient}. However, it is not a Hilbert space \cite{peyre2019computational}, and tools such as PCA or LDA (Linear Discriminant Analysis) for statistical learning cannot be directly applied on raw probability measures. The OT linearization \cite{wang2013linear} then consists in embedding probability measures in a linear space, allowing to apply standard statistical tools.  {\RL In particular it has been used to perform PCA on images modeled as probability measures \cite{basu2014detecting,wang2013linear}}. Given a set of probability measures $(\mu_i)_{1\leq i \leq N}$ and an absolutely continuous reference measure $\mu$, linearized OT consists in embedding the $\mu_i$'s in the tangent space $\mathcal{T}_{{\mu}}$ at $\mu$, defined as a subspace of the Hilbert space $L^2({\mu},\mathbb{R}^d) = \{ v: \mathbb{R}^d \rightarrow \mathbb{R}^d ~|~ \int_{\mathbb{R}^d} \|v\|^2 \mathrm{d}{\mu} < \infty \}$, endowed with the weighted $L^2$ inner product $\langle T_1, T_2 \rangle_{L^2({\mu})} = \int_{\mathbb{R}^d} T_1(x)T_2(x) \mathrm{d}{\mu}(x)$. A measure $\mu^i$ is therefore embedded into $L^2({\mu},\mathbb{R}^d)$ through the following logarithm map, by analogy with the Riemannian setting :
\begin{equation}\label{logarithm_map}
    \text{Log}_{\mu}: \mu^i \mapsto T_{\mu^i} - \text{Id},
\end{equation}
where $T_{\mu^i}$ is the Monge map such that  $T_{\mu^i\#}\mu = \mu_i$. Note that $\text{Log}_{\mu}(\mu)=0$, and that the logarithm preserves the Wasserstein distance to the reference measure $\mu$, that is $W_2(\mu,\mu^i) = \|\text{Log}_{\mu}(\mu^i)\|_{L^2(\mu)}$.

The linearized OT distance between $\mu^i$ and $\mu^j$ with reference measure $\mu$ is then defined as
\begin{equation}\label{linearizedW2}
d_{\text{LOT}}(\mu^i,\mu^j) = \|\text{Log}_{\mu}(\mu^i) - \text{Log}_{\mu}(\mu^j) \|_{L^2(\mu)}.
\end{equation}
 Following linearized OT, one can then apply standard statistical tools in a linear space to the  embedded data $\text{Log}_{\mu}(\mu^i)$.
In practice, we shall apply linearized OT to the set $(\nu^i)_{1 \leq i \leq N}$ of discrete measures \eqref{eq:measuresnu} obtained after the quantization step described in Section \ref{sec:quantization}. As a reference measure, we propose using the Wasserstein barycenter \eqref{wbary} of $(\nu^i)_{1 \leq i \leq N}$ denoted by $\nu_{\text{bar}}$. The measure $\nu_{\text{bar}}$ is thus discrete, and optimal Monge maps might not exist from $\nu_{\text{bar}}$ to $\nu^i$. In that case, one can approximate the map from an optimal transport plan  $P^i$ solution of \eqref{kanto} with the barycentric mapping  $T^i$ \cite{peyre2019computational} defined for $i=1,\ldots,N$ as

\begin{equation}\label{barymapping}
     T^i(x_k) = \frac{1}{\bar{a}_k} \sum_{\ell=1}^K x_\ell  P^i_{k\ell},
\end{equation}
where $\nu_{\text{bar}} = \sum_{k=1}^K \bar{a}_k \delta_{x_k}$ and $\nu^i = \sum_{k=1}^K a_k^i \delta_{x_k}$. Note that the barycentric mappings $(T^i)_{i=1}^N$ are well defined on the support of the reference measure $\nu_{\text{bar}}$. {\RL Applying linearized OT to a set of $N$ (typically in the DataML-Bordeaux study, we have $N\sim500$) FCMs would boil down to solving $N$ times the OT problem with sets of $n\sim10^5$ points. This motivates the need for the previously described mean measure quantization step, which allows to reduce the size of the point clouds, and consequently decrease the computational times.}

\subsection{Representation of Datasets with PCA}
\label{sec:PCA}

We use statistical inference methods that integrate the simultaneous analysis of a large number of FCM samples (for several patients, at the time of leukemia diagnosis and during post-treatment follow-ups).  Our aim is then to   discriminate FCM data during follow-up according to the level of significance of MRD using low-dimensional data representation in a linear space as described previously. Note that given the dimension and the number of observations from cytometry, solving the OT problem between raw FCM datasets is intractable. A key contribution of our work is then to summarize  multiple FCM using the quantization step in Section \ref{sec:quantization} based on the K-means of the  merged datasets.  This procedure will be shown to be faster than using FlowSOM  as proposed in \cite{vial2021unsupervised} for MRD detection.
In the following, we thus exclusively work with the measures $(\nu^i)_{i=1}^N$ defined in \eqref{eq:measuresnu}, where $K$ is much smaller than the original numbers of cells in the raw samples.

\subsubsection{Statistical Analysis on the K-means Weight Vectors}

As the measures $(\nu^i)_{i=1}^N$ share a common support, it is natural to compare them in terms of their weight vectors $ a^i = (\mu^i(V_{x_1}), \ldots, \mu^i(V_{x_K}))_{i=1}^N$, belonging to the probability simplex $\Sigma_K$. We can then apply the classical tools of statistical analysis of compositional data belonging to the probability simplex \cite{Aitchison83}.

\subsubsection{Statistical Analysis with Linearized OT}

 The new methodology that we propose to represent and cluster FCM datasets consists of:
\begin{enumerate} 
    \item Computing a Wasserstein barycenter $\nu_{\text{bar}}$ \ref{wbary} of the measures $(\nu^i)_{i=1}^N$, with a support of size $K$.
    \item Computing $N$ transport plans $(P^{i})_{i=1}^N$ \eqref{kanto} between $\nu_{\text{bar}}$ and $\nu^i$, and their respective barycentric projections $T_{\nu^i}$ as in \eqref{barymapping}.
    \item Embedding the measures $\nu^i$ in a finite vector space through the logarithmic map \eqref{logarithm_map}. We denote $v^i:=\text{Log}_{\nu_{\text{bar}}}(\nu^i)\in\mathbb{R}^{d\times K}$.
    \item Applying classical Euclidean PCA to the vectors $v^i$  to visualize the data variability.
\end{enumerate}

We thus map $\nu^i$ to a vector $v^i$ in $\mathbb{R}^{d \times K}$ that is endowed with an inner product weighted by $\nu_{\text{bar}}$.  By retaining the first two components of a PCA of the vectors $v^i$, we are able to visualize each FCM dataset as a point in a 2D plan. We claim that this is helpful to the biologists to visually analyze the proximity between datasets that translates to a certain phenotype proximity.

\subsection{Performance Evaluation}
\label{sec:perf}

For some cytometry datasets of the DATAML-Bordeaux study, ground truth labels  are available with the measures  (MRD-BioM), giving a percentage linked to quantities expressing the presence of blasts.  MRD-BioM is then defined as being positive when its value is higher than {\RL $0.02\%$} and negative otherwise. We will then refer to positive and negative follow-ups respectively. On the other hand, the FlowSOM method also provides a measure of MRD, MRD-FlowS, which is related to the percentage of blasts in the bone marrow. {\RL Recent  studies \cite{rodriguez2024optimal, zhangrisk} demonstrate  that the commonly used 0.1\% threshold in flow cytometry may not optimally differentiate  patients with low versus high relapse risk. These studies identify an  optimal prognostic threshold in the range of 0.01\% to undetectable levels. Based on these findings, we define MRD positivity as the  detection of at least 20 LAIP events within a 500,000-cell sample,  consistent with established statistical confidence thresholds for flow  cytometry.}

As a criterion to assess the performances of our approach and alternative methods, we use the silhouette score, which is a value for evaluating a clustering result. It measures how close an element is to its own cluster and how far it is from the other clusters. The silhouette scores ranges from -1 to 1, the highest value corresponding to a perfect clustering. The score of a point $x$ is defined as:
$$S(x) = \frac{b(x)-a(x)}{\max(b(x),a(x))}$$
where $a(x)$ is the average distance between $x$ and the other points of its clusters, and $b(x)$ is the average distance between $x$ and points of other clusters. The silhouette score of a clustering is then the average of the silhouette scores of all the points in the dataset.

\subsection{Alternative Method}
\label{sec:KME}

We benchmark our approach with the popular kernel mean embedding (KME)  technique \cite{muandet2017kernel}  for statistical learning from multiple high-dimensional probability distributions by representing them  in a reproducing kernel Hilbert space (RKHS) $\mathcal{H}$ with inner product $\langle \cdot,\cdot\rangle_{\mathcal{H}}$. Given a kernel $k(x,y) = \langle \phi(x),\phi(y)\rangle_{\mathcal{H}}$ (a positive definite and symmetric function) associated to a feature map $\phi: \mathcal{X} \rightarrow \mathcal{H}$,  KME is defined as the following mapping
\begin{align}
   \Phi: \mathcal{M}_+^1(\mathcal{X})&\rightarrow \mathcal{H} \nonumber\\
   \mu &\rightarrow \int_{\mathcal{X}} k(x,\cdot)\mathrm{d}\mu(x).\label{kme}  
\end{align}
that embeds the set $\mathcal{M}_+^1(\mathcal{X})$ of probability measures supported on $\mathcal{X}$ into $\mathcal{H}$.
Under a characteristic  assumption on the kernel (namely that that $\|\Phi(\mu)-\Phi(\nu) \|_{\mathcal{H}} = 0$ if and only if $\mu=\nu$) KME defines a distance  $\|\Phi(\mu)-\Phi(\nu) \|_{\mathcal{H}}$, called a Maximum Mean Discrepancy (MMD), on the space of probability distributions.

To implement KME for high-dimensional probability measures, we rely on random Fourier features \cite{rahimi2007random} in order to approximate the embedding \eqref{kme}. If $k$ is a translation invariant kernel, then there exists a function $\psi$ such that $k(x,y) = \psi(x-y)$, and Bochner's theorem \cite{bochner1939additive} implies that
\begin{equation}\label{bochner}
    \psi(x-y) = \int_{\mathbb{R}^d} e^{i\omega^T(x-y)}\mathrm{d}\Lambda_k(\omega)
\end{equation}
where $\Lambda_k$ is a Borel measure depending on the kernel $k$. Random Fourier features provide an approximation of $k(x,y)$  with a feature map $\tilde{\varphi}(x)\in\mathbb{R}^s$ such that $\tilde{\varphi}(x)^T\tilde{\varphi}(y)\approx k(x,y)$, which is defined as :

$$\tilde{\varphi}(x) = (\sin(\omega_1^Tx),\cdots,\sin(\omega_{s/2}^Tx),\cos(\omega_1^Tx),\cdots,\cos(\omega_{s/2}^Tx))$$ and where the $\omega_i$'s are independently sampled from $\Lambda_k$.  In this particular framework, KME becomes the following mapping between probability measures and a  linear space of finite dimension $s$:
\begin{align}
   \tilde{\Phi}: \mathcal{M}_+^1(\mathcal{X})&\rightarrow \mathbb{R}^s \nonumber\\
   \mu &\rightarrow \int_{\mathcal{X}} \tilde{\varphi}(x)\mathrm{d}\mu(x).\label{kme_fourier}  
\end{align}
Note that when $\mu$ is a discrete measure, the above integral is a sum over the support points of $\mu$.
In our implementations, we use a radial basis kernel with $\Lambda_k=\mathcal{N}(0,\sigma^{-2}\mathrm{Id})$.

\section{Results}
\label{sec:results}

\subsection{HIPC Dataset}

\subsubsection{PCA  for Visualization}

In Figure  \ref{PCA_viz}, we first compare two-dimensional representations of the HIPC dataset obtained by a PCA  computed from four different embeddings:
\begin{enumerate}
    \item Kernel Mean Embedding (\emph{KME})
    \item Log-ratio transform of the compositional data  $(a^i)_{i=1}^N$ (\emph{K-Means + comp})
    \item OT linearization of the K-means clustered measures $(\nu^i)_{i=1}^N$ (\emph{K-Means + LinW2})
    \item OT linearization of the FlowSOM clustering of $(\mu^i)_{i=1}^N$ (\emph{FlowSOM + LinW2}).
\end{enumerate}

{\RL 
 {\CT When selecting the number of clusters $K$, one aims at compromising between performance and computational efficiency.} In Figure \ref{fig:choice_parameters}, we compare the silhouette score (representing performance) and the execution time of our method on the HIPC dataset when $K$ varies from 10 to 250. The silhouette score is computed in the three-dimensional space obtained after a three-component PCA applied on the LOT embedded data. We chose to keep three components with an elbow criterion. We observe that for $K\geq50$, the silhouette score {\CT stabilizes} around 0.66 whereas the time of execution grows rapidly. Moreover, the authors {\CT of the FlowSOM method \cite{vial2021unsupervised}} chose $K=64$ for the number of clusters, which in our analysis corresponds to a good trade-off between performance and computational time. In the same Figure \ref{fig:choice_parameters}, we also compare different choices for the reference measure {\CT in the LOT embedding \eqref{logarithm_map}. The first natural choice is the Wasserstein barycenter $\nu_{\mathrm{bar}}$ \eqref{wbary} of the $\nu^{i}$'s. We also consider as a reference measure the uniform distribution supported on the 
the K-means centers $\nu_{\mathrm{unif}} = (1/K)\sum_{k=1}^K\delta_{x_k}$ and a measure $\nu_{\mathrm{rand}} = \nu^{i}$ randomly chosen among the $\nu^{i}$'s.} Another possibility introduced in \cite{khurana2023supervised} is to choose multiple references for multiple embeddings. We also tested this option by randomly choosing two indexes $i,j$ and performing LinW2 with respect to {\CT the product space} $L^2(\nu^{i})\times L^2(\nu^{j})$. We {\CT observe} that choosing the Wasserstein barycenter as the reference measure performs better in general. This could be computationally expensive but {\CT remains} reasonable when $K=64$.}
 
 \begin{figure}[htbp]
\centering
\includegraphics[width=0.5\textheight]{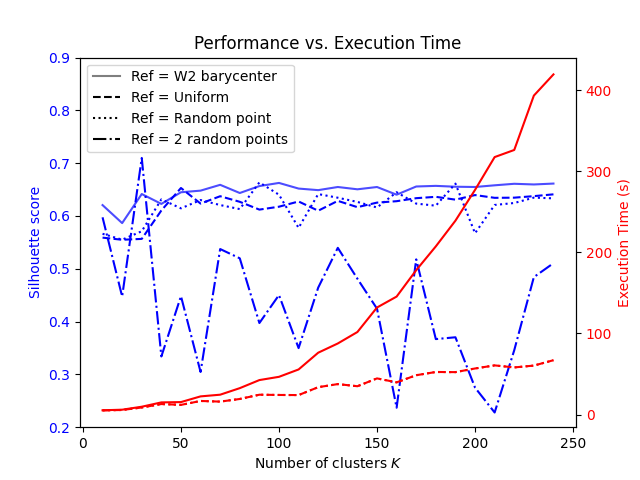}
\caption{{\RL HIPC dataset : Silhouette score and execution time of our method as a function of the number of clusters. The red dotted line represents the execution time for the reference measure chosen as either uniform, random among the $\nu^{i}$'s or the concatenation of two measures $(\nu^{i},\nu^{j})$.}}\label{fig:choice_parameters}
\end{figure}
 
 For the KME, two parameters are required: the bandwidth $\sigma$ for RBF (Radial Basis Function) kernel and the dimension $s$ of the Fourier features. In order to fairly compare results, we choose $s=K\times d = 448$ as the linearization sends measures to vectors in $ \mathbb{R}^{64\times7}=\mathbb{R}^{448}$. Regarding the bandwidth of the kernel, we set $\sigma=5$, which allows to best discriminate the patients using PCA. 

In Figure \ref{PCA_viz}, we use K = 64. The OT linearization allows to obtain a visually accurate discrimination of the datasets according to the patients' labels whereas the representations obtained by KME and the compositional data analysis struggle to clearly separate datasets. Within the patients' clusters, we can even distinguish clusters according to the three stained replicates. The OT linearization on data transformed by K-means or FlowSOM shows similar visual results.

\begin{figure}[htbp]
\includegraphics[width=0.7\textheight]{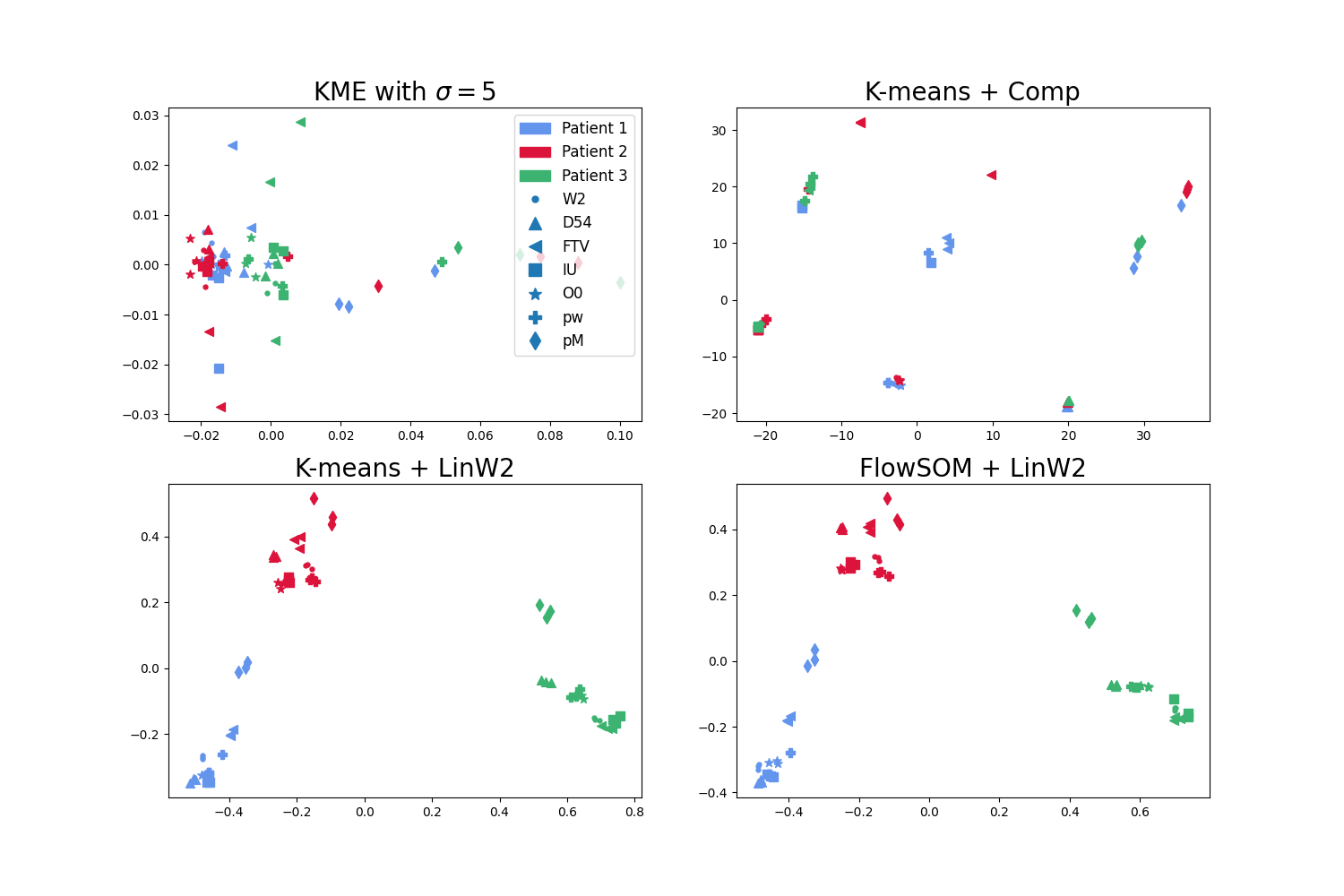}
\caption{Two-dimensional representations of the HIPC data using PCA on different embeddings  : (top left) KME, (top right) K-Means + comp, (bottom left) K-Means + LinW2, (bottom right) FlowSOM + LinW2. Each color corresponds to a patient. Each marker encodes the laboratory where the data was processed.}\label{PCA_viz}
\end{figure}

\subsubsection{Silhouette Scores}

In Table \ref{silhouette}, we report silhouette scores to evaluate how well each of the four embedding discriminates between clusters. We note that silhouette scores should not be compared for different dimensions: as distances become similar in higher dimensions, the score naturally worsen in those cases. Kernel mean embedding performs poorly and K-means followed by compositional analysis gives bad results when $K>16$. As expected from the PCA visualizations, the silhouette scores for OT linearization on FlowSOM and K-means are highly similar.

{\RL
\begin{table*}[htbp]
\caption{Silhouette scores for the different embeddings on HIPC. For each method, we compute the score either directly on raw data or on the data projected onto the first {\RL three (chosen with the elbow criterion) } principal components of the PCA. The best score per row is indicated in bold.}\label{silhouette}
\tabcolsep=0pt
\begin{tabular*}{\textwidth}{@{\extracolsep{\fill}}lcccccccccc@{\extracolsep{\fill}}}
\hline%
& \multicolumn{2}{@{}c@{}}{KME} & 
\multicolumn{2}{@{}c@{}}{K-Means + Comp}& 
\multicolumn{2}{@{}c@{}}{FlowSOM + LinW2}& 
\multicolumn{2}{@{}c@{}}{K-Means + LinW2} \\
\cline{2-3}\cline{4-5}\cline{6-7}\cline{8-9}%
$K$ & Raw & PCA & Raw & PCA & Raw & PCA & Raw & PCA\\
None  & -0.02  & -0.1 &  &  &  & & & \\
16  &  &  & 0.47 & 0.48 & 0.46 & 0.55 & 0.46 & \textbf{0.57}\\
32  &  &  & 0.01 & 0.51 & 0.34 & \textbf{0.62} & 0.38 & \textbf{0.62} \\
64  &  &  & 0.05 & 0.04 & 0.29 & 0.64 & 0.28 & \textbf{0.65} \\
128  &  &  & 0.06 & 0.04 & 0.24 & \textbf{0.66} & 0.24 & 0.64 \\
256  &  &  & 0.05 & 0.1 & 0.21 & \textbf{0.65} & 0.2 & \textbf{0.65}\\
\hline
\end{tabular*}
\end{table*}
}

\subsubsection{Execution Time}

Computations were performed using HPC resources from the MCIA (M\'esocentre de Calcul Intensif Aquitain) of the Universit\'e de Bordeaux. In Figure  \ref{time_comparison}, we compare the computational times of each embedding. K-means takes the upper hand, being two  to three times faster than FlowSOM. KME is however significantly slower. The linearization of optimal transport takes less than 2 seconds no matter the dimension. The most time-consuming step in our process is computing the reference measure $\nu_{\text{ref}}$, which we have defined as the barycenter of all $N$ measures $(\nu^i)_{i=1}^N$. Performing a Wasserstein barycenter is indeed costly, especially as the number K of clusters increases. However, for our dimensionality reduction purpose, K is kept small. In any case, one could set the reference measure as the uniform measure supported on the K-means centers $\nu_{\text{ref}}= \frac{1}{K} \sum_{k=1}^K \delta_{x_k}$. This procedure is significantly faster, but leads to a slight drop in the silhouette score.

\begin{figure}[htbp]
\centering
\includegraphics[width=0.7\textheight]{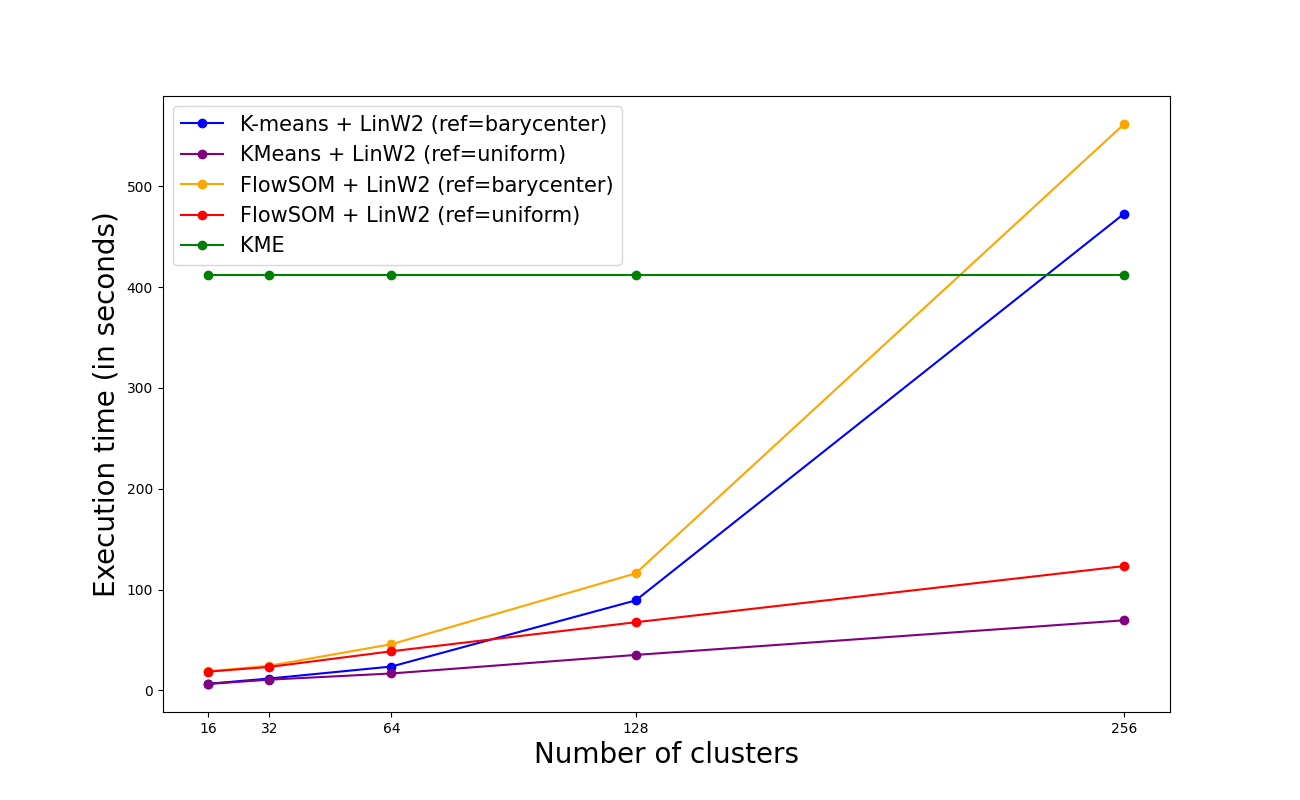}
\caption{Comparison of time of execution for the different embeddings.}\label{time_comparison}
\end{figure}

\subsection{DATAML-Bordeaux Dataset}

\subsubsection{Representation of a Patient's Data}

The first step of our analysis is to perform a K-means on the merged DATAML-Bordeaux dataset. This results in $N$ measures $\nu^i = \sum_{k=1}^K a^i_k \delta_{x_k}$ with a common support $\{x_1,\cdots,x_K\}\subset\mathbb{R}^d$. To compare ourselves with FlowSOM \cite{van2015flowsom}, we propose to visualize the output of K-means with a tree.  We construct a graph whose nodes are the K-means centers $x_1,\ldots,x_K$. Each pair of nodes $x_i$ and $x_j$ is connected by an edge with its weight corresponding to the distance $\|x_i-x_j\|^2$. We then compute a minimum spanning tree (MST) from this graph. One can visualize this MST corresponding to different data of one patient in Figure \ref{mst}. The size of the nodes vary from one MST to another, as it represents the number of cells in the corresponding clusters. Comparing a MST with the common MST reveals biological specificities. In the case of Figure \ref{mst}, we compare the diagnosis of the patient, two of their follow-ups, and a NBM. It is interesting to notice that the node at the center of the tree holds quite some mass in the diagnosis and almost none in the NBM. One could speculate that this node is a cluster of blasts. This assumption is also supported by the fact that this node has more mass in the second follow-up where the MRD-BioM value is higher, than in the first follow-up where the MRD-BioM value is lower.

\begin{figure}[htbp]
\centering
\includegraphics[width=0.7\textheight]{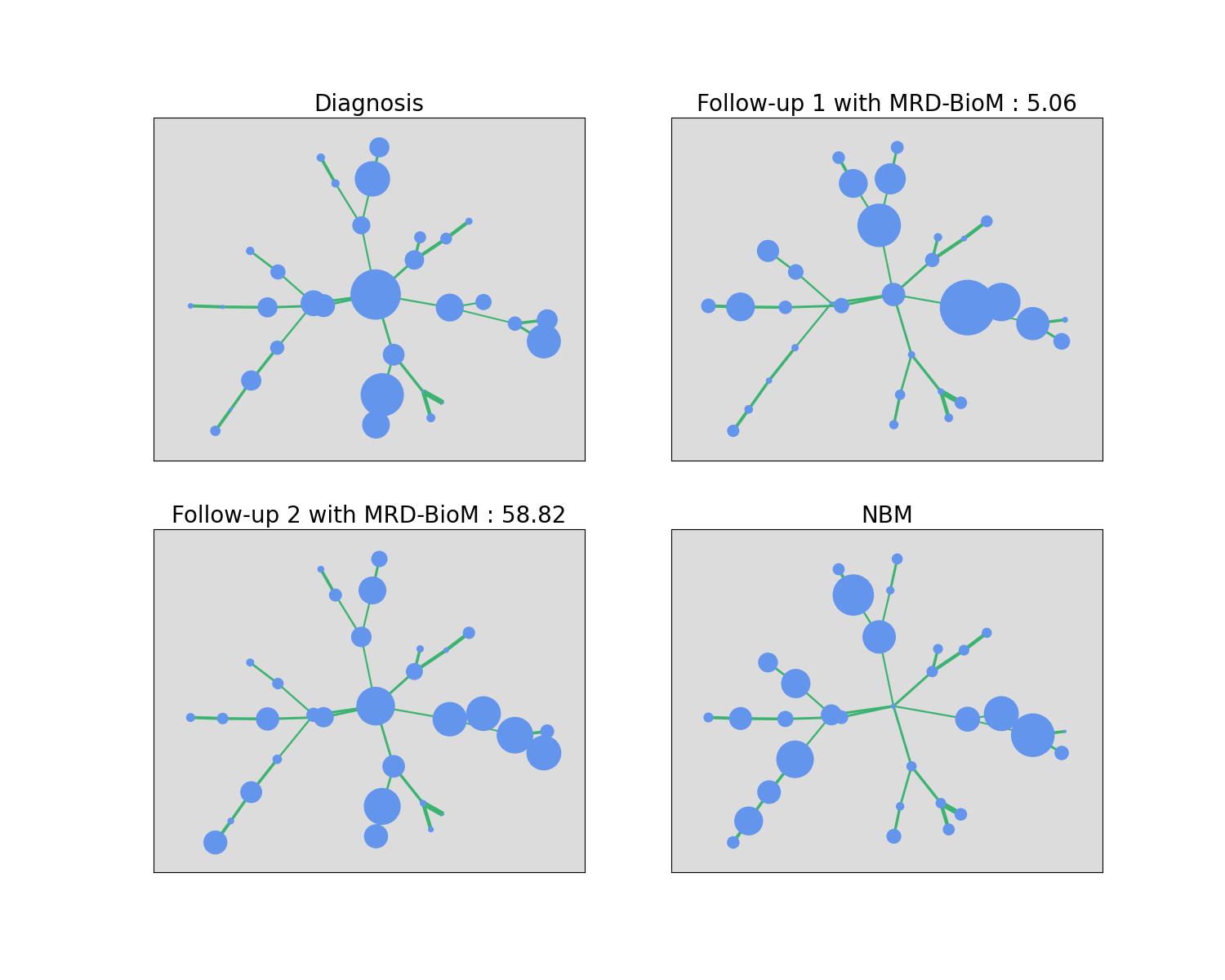}
\caption{Minimum Spanning Tree for the diagnosis and two follow-up measurements of one patient of the DATAML-Bordeaux dataset, and one normal bone marrow.}\label{mst}
\end{figure}

\subsubsection{PCA}

In Figure \ref{PCA_DataML}, we compare two-dimensional representations of the dataset obtained by a PCA computed from the KME, the log-ratio transform of compositional data of K-means and the linearization of OT on K-means clustering. The OT linearization allows us to visually discriminate between diagnosis, normal bone marrow and follow-ups. It is however harder to distinguish positive from negative follow-ups.

In Figure \ref{PCA_MRD_size}, we visualize the amount of MRD in the follow-ups for the K-Means + LinW2 PCA method, where the size of the follow-up points represents the MRD-BioM value for the patient at the time that the sample was collected. We observe that high MRD-BioM follow-ups are pulled towards the set of diagnoses. Regarding the negative follow-ups, these are evidently more directed towards NBM and clearly separated from the set of diagnoses. We also visualize the MRD-FlowS predictions with the FlowSOM method in Figure \ref{PCA_fsom_pred}, where we observe the same effect: high MRD-FlowS points lie in the diagnosis set whereas lower MRD-FlowS are closer to normal bone marrow points. Our  low-dimensional representation of multi-patient FCM datasets is therefore consistent with the values of MRD-FlowS.

\begin{figure}[htbp]
\centering
\includegraphics[width=0.6\textheight]{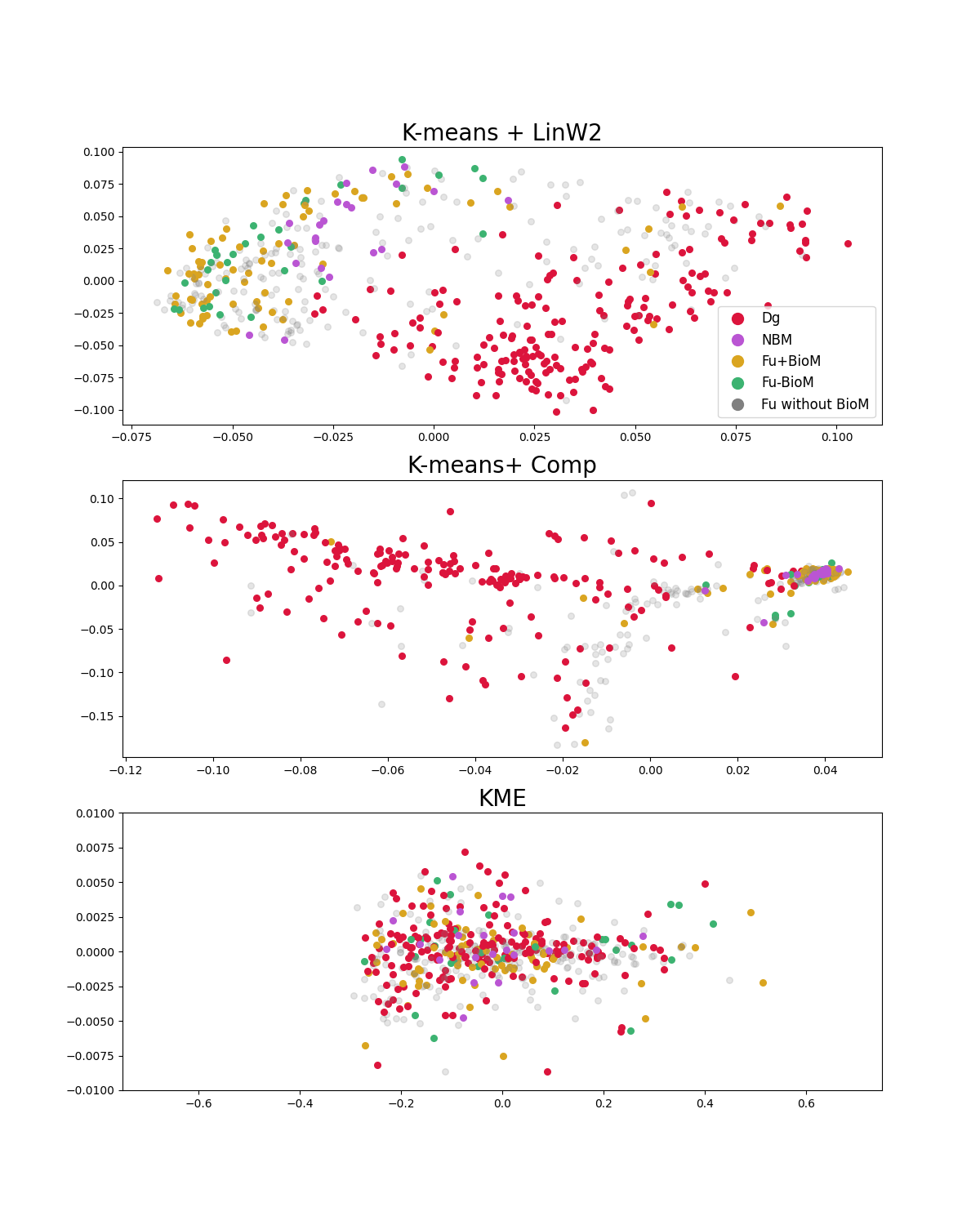}
\caption{Two-dimensional representations of the DATAML-Bordeaux datasets using PCA on different embeddings : (top) K-Means + LinW2, (middle) K-Means + comp, (bottom) KME. Each dot represents a FCM sample. The colors encore the nature of the data : (red) diagnostic, (purple) normal bone marrow, (yellow) positive follow-up, (green) negative follow-up, (gray) no information on the MRD-BioM.} \label{PCA_DataML}
\end{figure}

\begin{figure}[htbp]
\centering
\includegraphics[width=0.7\textheight]{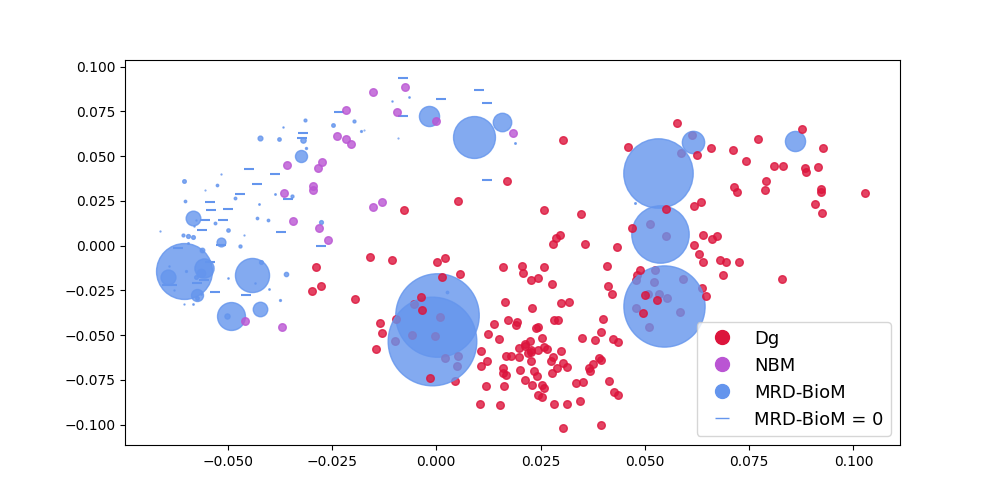}
\caption{Two-dimensional representation of the DATAML-Bordeaux datasets using PCA on the K-Means + LinW2 embedding. The size of the follow-ups is scaled with a logicle transformation of the MRD-BioM values ($\log(y)$ when $y\geq 1$, $y$ else).\label{PCA_MRD_size}}
\end{figure}

\begin{figure}[htbp]
\centering
\includegraphics[width=0.7\textheight]{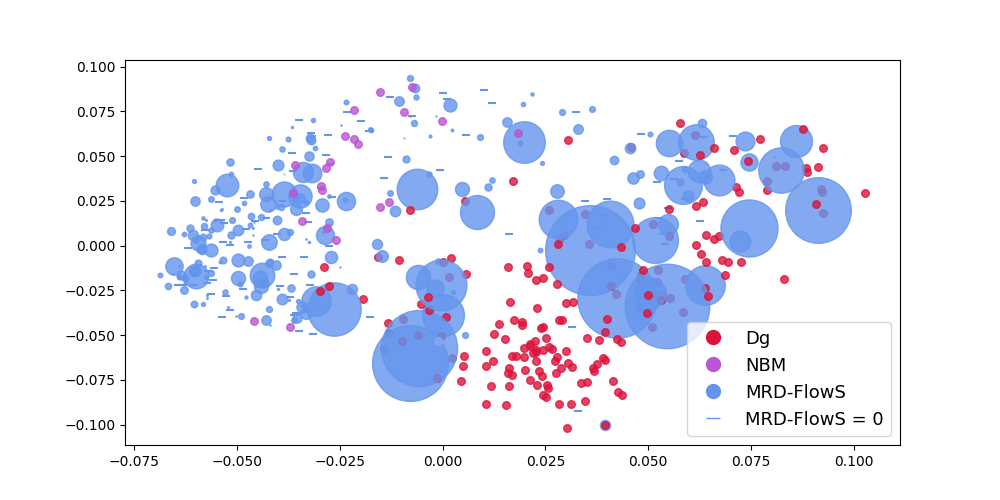}
\caption{Two-dimensional representation of the DATAML-Bordeaux datasets using PCA on the K-Means + LinW2 embedding. The size of the follow-ups is scaled to the values of MRD-FlowS.}\label{PCA_fsom_pred}
\end{figure}

\subsection{Silhouette Scores}

We finally report in Table \ref{silhouette_dataML}  silhouette scores to evaluate how the various embeddings of the DATAML-Bordeaux dataset displayed in Figure \ref{PCA_DataML}  allow us to discriminate clusters. We consider the following clusters: diagnosis, normal bone marrow, positive MRD-BioM follow-ups, and negative MRD-BioM follow-ups. We leave out the follow-ups for which we do not have a value for MRD-BioM. From the results reported in Table \ref{silhouette_dataML}, it can be seen that the clustering obtained with  OT linearization is still the best one.
 
\begin{table*}[htbp]
\caption{Silhouette scores for the different embeddings on DATAML-Bordeaux. For each method, we compute the score either directly on raw data or on the data projected onto the first {\RL three (chosen by the elbow criterion)} principal components of the PCA. The best score per row is indicated in bold.}\label{silhouette_dataML}

\tabcolsep=0pt
\begin{tabular*}{\textwidth}{@{\extracolsep{\fill}}lcccccccccc@{\extracolsep{\fill}}}
\hline%
& \multicolumn{2}{@{}c@{}}{KME} & 
\multicolumn{2}{@{}c@{}}{K-Means + Comp}& 
\multicolumn{2}{@{}c@{}}{K-Means + LinW2} \\
\cline{2-3}\cline{4-5}\cline{6-7}%
$K$ & Raw & PCA & Raw & PCA & Raw & PCA\\
None  & -0.19  & -0.22 &  &  &  & \\
16  &  &  & -0.015 & -0.2 & \textbf{0.04} & \textbf{0.04} \\
32  &  &  & -0.09 & -0.04 & 0.07 & \textbf{0.09} \\
64  &  &  & -0.004 & -0.1 & 0.06 & \textbf{0.1}  \\
128  &  &  & -0.03 & 0 & 0.06 & \textbf{0.08}  \\
256  &  &  & -0.02 & 0.02 & 0.04 & \textbf{0.08} \\
\hline
\end{tabular*}
\end{table*}

{\RL
\subsection{Supervised Classification}

One can leverage our dimension reduction method for classification purposes. After applying a three-component PCA on the different embeddings, we consider two classes : positive follow-ups (Fu+) and negative follow-ups (Fu-). For each embedding, we apply a three-component PCA on the embedded data, train a logistic regression model and predict either a positive or negative follow-up with a leave-one-out cross validation technique. We obtain the confusion matrices shown in Figure \ref{fig:conf_matrices}. In order to compare the different methods, we also provide in Figure \ref{fig:conf_matrices} the balanced accuracy scores, which allow evaluating classifier performances when dealing with imbalanced data, that is when one of the classes is more present than the other. We observe that the LOT method outperforms the others in terms of this score.

\begin{figure}[htbp]
\centering
\includegraphics[width=0.75\textheight]{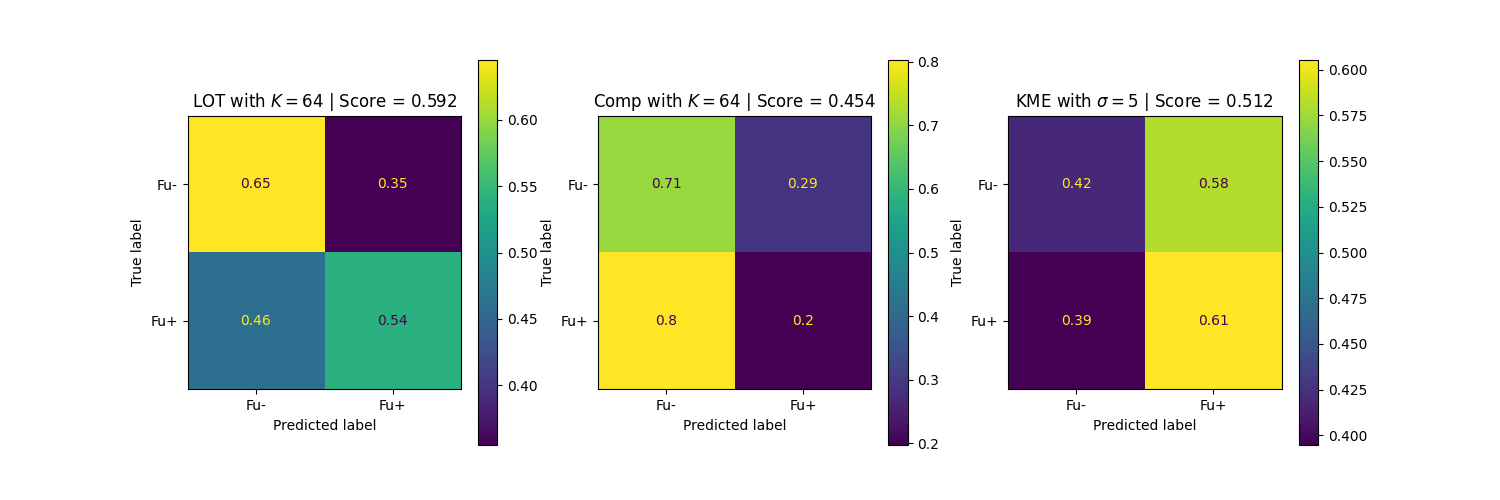}
\caption{{\RL Confusion matrices and balanced accuracy scores for the logistic regression classifier on the (left) K-means+LinW2+PCA, (middle) K-means+Comp+PCA and (right) KME+PCA models. The balanced accuracy score is defined as the arithmetic mean between specificity and sensitivity : $\frac{1}{2}\bigl(\frac{TP}{TP+FN}+\frac{TN}{TN+FP}\bigr).$}}\label{fig:conf_matrices}
\end{figure}


}

\section{Discussion} \label{sec:dis}

In this study, we have introduced a novel approach,  via mean measure quantization and linearized OT, for low-dimensional embedding in a linear space of multi-patient FCM datasets viewed as discrete probability measures with large dimensional supports. We assessed the benefits of OT linearization over compositional data analysis and kernel mean embedding by showing that our OT-based approach yields the best silhouette scores for patient clustering. The biological relevance of our OT-based embedding  has also been assessed by relating the consistency of our low-dimensional representation of multi-patient FCM datasets to the level of estimated MRD in follow-up using either  molecular biology or  the FlowSOW method. Interestingly, patients with positive follow-ups and large MRD values tend to be clustered, in our low-dimensional representation, with data points representing FCM from diagnosis. These results suggest that using OT for  statistical learning from multi-patient FCM datasets is an interesting research direction in order to help the experts with MRD assessment. Combined with the FlowSOM method, our approach could enhance the accuracy of relapse prognosis which is critical for making informed treatment decisions for AML  patients. {\RL Indeed, if biologists have predicted a positive MRD with their method and the corresponding sample is close to the diagnosis in the two-dimensional embedding space, they can confirm their prediction. On the other hand, if they have predicted negative MRD and the sample is among diagnosis, they might want to take another look at their analysis.}

\section{Acknowledgments}

This work benefited from financial support from the French government managed by the National Agency for Research under the France 2030 program, with the reference ANR-23-PEIA-0004. We would like to thank the data management unit of Toulouse University Hospital and the CAPTOR (Cancer Pharmacology of Toulouse Oncopole and Region) project (ANR-11-PHUC-001) for its financial support enabling e-CRF.

\bigskip
\noindent
\textbf{Conflict of interest.} The authors have no conflict of interest to declare.

\bigskip
\noindent
\textbf{Author contributions.} Erell Gachon: conceptualization, methodology, writing - original draft, investigation, writing - review and editing. J\'er\'emie Bigot: conceptualization, methodology, writing - original draft, investigation, writing - review and editing, supervision, funding acquisition. Elsa Cazelles: conceptualization, methodology, writing - original draft, investigation, writing - review and editing, supervision, funding acquisition. Audrey Bidet : resources, data curation, validation, writing - review and editing. Jean-Philippe Vial: resources, data curation, validation, writing - review and editing. Pierre-Yves Dumas: resources, data curation, validation, writing - review and editing. Aguirre Mimoun: resources, data curation, validation, writing - review and editing.

\bibliographystyle{plain}
\bibliography{reference}

\end{document}